\newtheorem{theorem}{Theorem}[section]
\newtheorem{assumption}{Assumption}[section]
\newtheorem{proposition}{Proposition}[section]
\theoremstyle{definition}
\newtheorem{definition}{Definition}[section]
\newtheorem{example}{Example}[section]
\def\paren#1{\left( #1 \right)}    
\def\brack#1{\left[ #1 \right]}     
\newcommand\ex{\ensuremath{\mathbb{E}}} % Expectation
\newcommand{\1}[1]{\ensuremath{\mathbb{I}} \left[#1\right]} % Indicator
\newcommand{\X}[0]{\mathcal{X}}
\newcommand{\A}[0]{\mathcal{A}}
\newcommand{\Y}[0]{\mathcal{Y}}
\newcommand{\reals}{{\mbox{\bf R}}}
\newcommand{\opt}{^{*}}
\newcommand{\eps}{\varepsilon}
\DeclareMathOperator*{\argmax}{arg\,max}
\newcommand{\indep}{\perp \! \! \! \perp}
\newcommand{\simiid}{\overset{\textrm{i.i.d.}}{\sim}}
\def\norm#1{\left\| #1 \right\|}
\def\set#1{\left\{ #1 \right\}}
\DeclareMathOperator*{\sign}{sign} % Sign
\newcommand{\gi}{\mathrm{{\tt GoodIncentives}}}
\newcommand{\gc}{\mathrm{{\tt OutcomeMonotonicCost}}}
\newcommand{\improve}{I}
\newcommand{\parents}{\mathrm{\textbf{PA}}}
\newcommand{\ancestors}{\mathrm{\textbf{A}}}
\newcommand{\fail}{{\tt Fail}}
\gdef\isarxiv{1}
\newcommand{\arxiv}[2]{\ifdefined\isarxiv{#1}\else{#2} \fi }
\begin{document}

%%
%% The "title" command has an optional parameter,
%% allowing the author to define a "short title" to be used in page headers.
\title{Strategic Classification is Causal Modeling in Disguise}

\author{John Miller\and Smitha Milli\and Moritz Hardt}

\maketitle

\begin{abstract}
Consequential decision-making incentivizes individuals to strategically adapt
their behavior to the specifics of the decision rule. While a long line of
work has viewed strategic adaptation as gaming and attempted to mitigate its
effects, recent work has instead sought to design classifiers that incentivize
individuals to improve a desired quality. Key to both accounts is a cost
function that dictates which adaptations are rational to undertake. In this
work, we develop a causal framework for strategic adaptation. Our causal
perspective clearly distinguishes between gaming and improvement and reveals
an important obstacle to incentive design. We prove any procedure for
designing classifiers that incentivize improvement must inevitably solve a
non-trivial causal inference problem. Moreover, we show a similar result
holds for designing cost functions that satisfy the requirements of previous work.
With the benefit of hindsight, our results show much of the prior work on
strategic classification is causal modeling in disguise.
\end{abstract}

\section{Introduction}
\label{sec:intro}
Individuals faced with consequential decisions about them often use knowledge of
the decision rule to strategically adapt towards achieving a desirable outcome.
Much work in computer science views such \emph{strategic adaptation} as
adversarial behavior~\citep{dalvi2004adversarial,bruckner2012static},
manipulation, or \emph{gaming}~\citep{hardt2016strategic,dong2018strategic}.  More recent work
rightfully recognizes that adaptation can also correspond to attempts at
self-improvement~\citep{bambauer2018algorithm,kleinberg2019classifiers}. 
Rather than seek classifiers that are robust to
gaming~\citep{hardt2016strategic,dong2018strategic}, these works suggest to
design classifiers that explicitly \emph{incentive improvement} on some target
measure~\citep{kleinberg2019classifiers,alon2019multiagent,khajehnejad2019optimal,haghtalab2020maximizing}.

Incentivizing improvement requires a clear distinction between gaming and
improvement. While this distinction may be intuitive in some cases, in others,
it is subtle. Do employer rewards for punctuality improve productivity? It sounds
plausible, but empirical evidence suggests
otherwise~\citep{gubler2016motivational}.
Indeed, the literature is replete with examples of failed incentive
schemes~\citep{oates2015window, rich1984some,belot2016spillover}.

Our contributions in this work are two-fold. First, we provide the missing
formal distinction between gaming and improvement. This distinction is a
corollary of a comprehensive causal framework for strategic adaptation that we
develop. Second, we give a formal reason why incentive design is so difficult.
Specifically, we prove any successful attempt to incentivize improvement must
have solved a non-trivial causal inference problem along the way.

\subsection{Causal Framework}

We conceptualize individual adaptation as performing an \emph{intervention} in a
causal model that includes all relevant features~$X$, a predictor~$\hat Y$, as
well as the target variable~$Y$. We then characterize gaming and improvement by
reasoning about how the corresponding intervention affects the predictor
$\hat{Y}$ and the target variable $Y$. This is illustrated in
Figure~\ref{fig:figure1}.

We combine the causal model with an \emph{agent-model} that describes how
individuals with a given setting of features respond to a classification rule.
For example, it is common in strategic classification to model agents as being
rational with respect to a \emph{cost function} that quantifies the cost of
feature changes.

Combining the causal model and agent model, we can separate improvement from
gaming. Informally speaking, improvement corresponds to the case where the
agent response to the predictor causes a positive change in the target
variable~$Y$. Gaming corresponds to the case where the agent response causes a
change in the prediction~$\hat Y$ but not the underlying target variable~$Y$.
Making this intuition precise, however, requires the language of counterfactuals
of the form: What value would the variable~$Y$ have taken had the individual
changed her features to $X'$ given that her original features were $X$?

If we think of the predictor as a \emph{treatment}, we can analogize our notion
of improvement with the established causal quantity known as \emph{effect of
treatment on the treated}.

\subsection{Inevitability of Causal Analysis}

Viewed through this causal lens, only adaptations on causal variables can lead
to improvement. Intuitively, any mechanism for incentivizing improvement must
therefore capture some knowledge of the causal relationship between the features
and the target measure. We formalize this intuition and prove causal modeling is
unavoidable in incentive design. Specifically, we establish a computationally
efficient reduction from discovering the causal structure relating the variables
(sometimes called causal graph discovery) to a sequence of incentive design
problems. In other words, designing classifiers to incentivize improvement is as
hard as causal discovery.

Beyond incentivizing improvement, a number of recent works model individuals as
acting in accordance with well-behaved cost functions that capture the
difficulty of changing the target variable. We show constructing such
\emph{outcome-monotonic} cost functions also requires modeling the causal
structure relating the variables, and we give a similar reduction from designing
outcome-monotonic cost functions to causal discovery.

In conclusion, our contributions show that---with the benefit of
hindsight---much work on strategic classification turns out to be causal
modeling in disguise.

\subsection{Related Work}
This distinction between causal and non-causal manipulation in a classification
setting is intuitive, and such considerations were present in early work on
statistical risk assessment in lending~\citep{hand1997graphical}.  Although they
do not explicitly use the language of causality, legal
scholars~\citet{bambauer2018algorithm} give a qualitatively equivalent
distinction between gaming and improvement. While we focus on the incentives
classification creates for individuals, \citet{everitt2019understanding}
introduce a causal framework to study the incentives classification creates for
decision-makers, e.g. which features the decision-maker is incentivized to use.

\begin{figure}[t!]
    \centering
    \arxiv{
        \includegraphics[height=0.35\textheight]{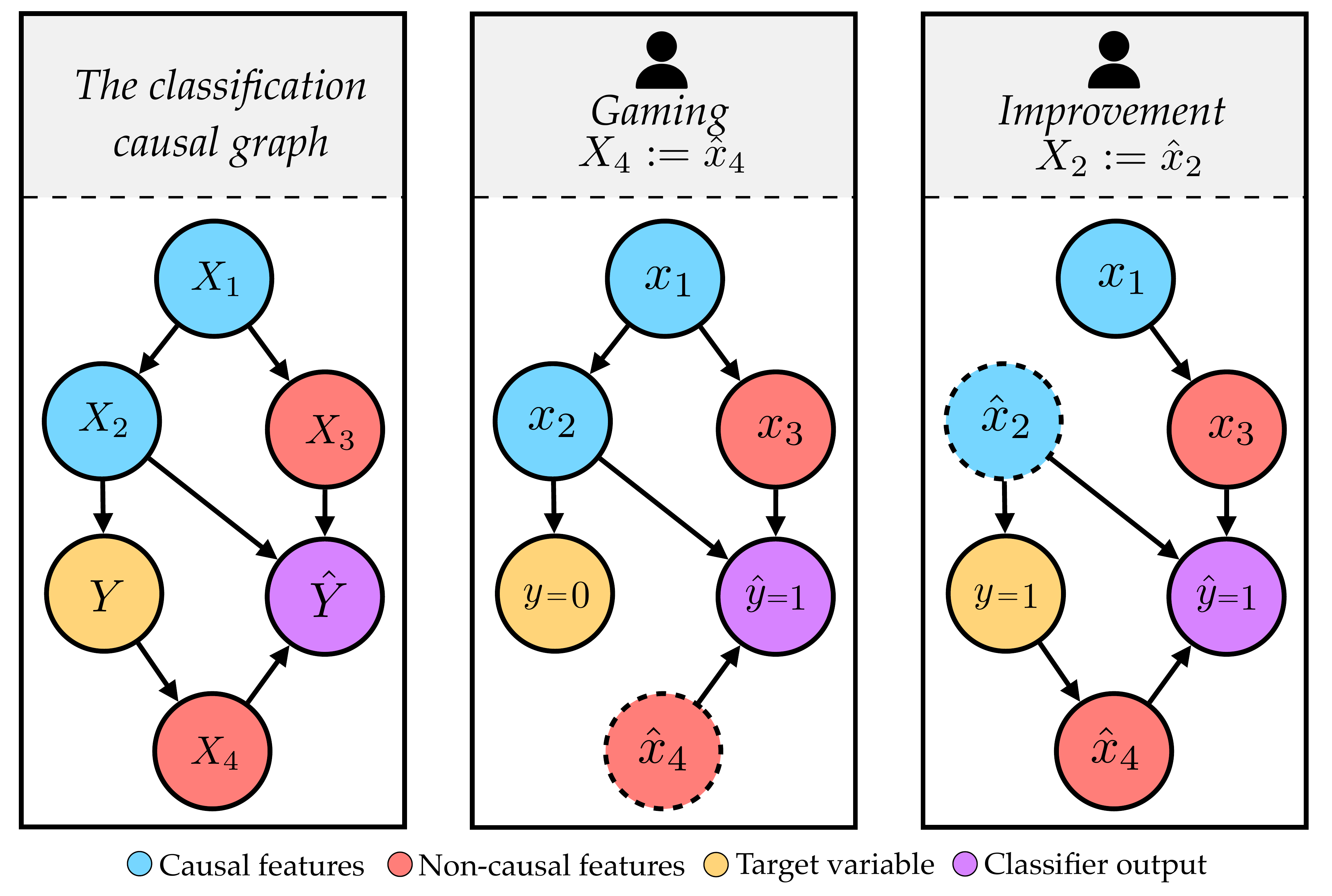}
     }{
        \includegraphics[width=\linewidth]{figs/figure1}
    }
    \caption{Illustration of the causal framework for strategic adaptation.
    Adaptation is modeled as interventions in a \emph{counterfactual} causal
    graph, conditioned on the individual's initial features $X$.  Gaming
    corresponds to interventions that change the classification $\hat{Y}$, but
    do not change the true label $Y$.  Improvement corresponds to
    interventions that change both the classification $\hat{Y}$ and the true
    label $Y$. Incentivizing improvement requires inducing agents to intervene
    on \emph{causal} features that can change the label $Y$ rather than
    \emph{non-causal} features. Distinguishing between these two categories of
    features in general requires causal analysis.}
    \label{fig:figure1}
\end{figure}

Numerous papers in strategic
classification~\citep{bruckner2012static,dalvi2004adversarial,hardt2016strategic,dong2018strategic}
focuses on game-theoretic frameworks for preventing gaming. These frameworks
form the basis of our agent-model,
and~\citet{milli2019social,garg2019role,khajehnejad2019optimal} introduce the
outcome-monotonic cost functions we analyze in Section~\ref{sec:cost_reduction}.
Since these approaches do not typically distinguish between gaming and
improvement, the resulting classifiers can be unduly conservative, which in turn
can lead to undesirable social
costs~\citep{hu2019disparate,milli2019social,garg2019role}. 

The creation of decision rules with optimal incentives, including incentives for
improvement, has been long studied in economics, notably in principle-agent
games \citep{ross1973economic, grossman1992analysis}. In machine learning,
recent work by \citet{kleinberg2019classifiers} and~\citet{alon2019multiagent}
studies the problem of producing a classifier that incentivizes a given ``effort
profile'', the amount of desired effort an individual puts into certain actions,
and assumes the evaluator knows which forms of agent effort would lead to
improvement, which is itself a form of causal knowledge.
\citet{haghtalab2020maximizing} seek to design classifiers that maximize
improvement across the population, while~\citet{khajehnejad2019optimal} seek to
maximize institutional utility, taking into account both improvement and gaming.
While these works do not use the language of causality, we demonstrate that
these approaches nonetheless must perform some sort of causal modeling if they
succeed in incentivizing improvement.

In this paper, we primarily consider questions of improvement or gaming from the
perspective of the decision maker. However, what gets categorized as
improvement or gaming also often reflects a moral judgement---gaming is
bad, but improvement is good. Usually good or bad means good or bad from the
perspective of the system operator.  \citet{ziewitz2019rethinking} analyzes how
adaptation comes to be seen as ethical or unethical through a case study
on search engine optimization. \citet{burrell2019when} argue that gaming can
also be a form of individual ``control'' over the decision rule and that the
exercise of control can be legitimate independently of whether an action is
considered gaming or improvement in our framework.

\section{Causal background} 
\label{sec:background}
We use the language of \emph{structural causal
models}~\citep{pearl2009causality} as a formal framework for causality. A
structural causal model (SCM) consists of endogenous variables $X = (X_1, \dots,
X_n)$, exogenous variables $U = (U_1, \dots, U_n)$, a distribution over the
exogenous variables, and a set of structural equations that determine the values
of the endogenous variables. The structural equations can be written
\begin{align*}
    X_i = g_i(\parents_i, U_i), \quad i = 1, \dots, n \,,
\end{align*}
where $g_i$ is an arbitrary function, $\parents_i$ represents the other
endogenous variables that determine $X_i$, and $U_i$ represents exogenous noise
due to unmodeled factors.

A structural causal model gives rise to a \emph{causal graph} where a
directed edge exists from $X_i$ to $X_j$ if $X_i$ is an input to the structural
equation governing $X_j$, i.e. $X_i \in \parents_j$. We restrict
ourselves to \emph{Markovian} structural causal models, which have an acyclic
causal graph and independent exogenous variables. The \emph{skeleton} of a
causal graph is the undirected version of the graph.

An \emph{intervention} is a modification to the structural equations of an SCM.
For example, an intervention may consist of replacing the structural equation
$X_i = g_i(\parents_i, U_i)$ with a new structural equation $X_i \coloneqq x_i$
that holds $X_i$ at a fixed value. We use $\coloneqq$ to denote modifications of
the original structural equations. When the structural equation for one variable
is changed, other variables can also change. Suppose $Z$ and $X$ are two
endogenous nodes, Then, we use the notation $Z_{X \coloneqq x}$ to refer to the
variable $Z$ in the modified SCM with structural equation $X \coloneqq x$.

Given the values $u$ of the exogenous variables $U$, the endogenous variables
are completely deterministic. We use the notation $Z(u)$ to represent the
deterministic value of the endogenous variable when the exogenous variables $U$
are equal to $u$. Similarly, $Z_{X \coloneqq x}(u)$ is the value of $Z$ in the
modified SCM with structural equation $X \coloneqq x$ when $U=u$. 

More generally, given some event $E$, $Z_{X \coloneqq x}(E)$ is the random
variable $Z$ in the modified SCM with structural equations $X \coloneqq x$ where
the distribution of exogenous variables $U$ is updated by conditioning on the
event $E$. We make heavy use of this \emph{counterfactual} notion. For more
details, see~\citet{pearl2009causality}.

\section{A Causal Framework for Strategic Adaptation}
\label{sec:framework}
In this section, we put forth a causal framework for reasoning about the
incentives induced by a decision rule. Our framework consists of two components:
\emph{the agent model} and \emph{the causal model}. The agent model is a
standard component of work on strategic classification and determines
what actions agents undertake in response to the decision rule. The causal model 
enables us to reason cogently about how these actions affect the agent's
true label. Pairing these models together allow us to distinguish between
incentivizing \emph{gaming} and incentivizing \emph{improvement}.

\subsection{The Agent Model}
As a running example, consider a software company that uses a classifier to
filter software engineering job applicants. Suppose the model considers, among
other factors, open-source contributions made by the candidate.  Some
individuals realize this and \emph{adapt}---perhaps they polish their resume;
perhaps they focus more of their energy on making open source contributions.
The agent model describes precisely how individuals choose to adapt in response
to a classifier.

As in prior work on strategic
classification~\citep{hardt2016strategic,dong2018strategic}, we model
individuals as \emph{best-responding} to the classifier. Formally, consider an
individual with features $x \in \X \subseteq \mathbb{R}^n$, label $y \in \Y
\subseteq \mathbb{R}$, and a classifier $f: \mathbb{R}^n \to \Y$. 
The individual has a set of available actions $\A$, and, in response to the
classifier $f$, takes action $a \in \A$ to adapt her features from $x$ to 
$x+ a$. For instance, the features $x$ might encode the candidate's existing
open-source contributions, and the action $a$ might correspond to making
additional open-source contributions. Crucially, these modifications incur a
\emph{cost} $c(a; x)$, and the action the agent takes is determined by directly
balancing the benefits of classification $f(x+a)$ with the cost of adaptation
$c(a; x)$. 

\begin{definition}[Best-response agent model]
Given a cost function $c: \A \times \X \to \reals_{+}$ and a classifier $f:\X
\to \Y$, an individual with features $x$ best responds to the classifier $f$
by choosing action
\begin{align}
    a\opt \in \argmax_{a \in \A} f(x + a) - c(a; x).
\end{align}
    Let $\Delta(x; f) = x + a\opt$ denote a \emph{best-response} of the agent
    to classifier $f$.  When clear from context, we omit the dependence on $f$
    and write $\Delta(x)$.
\end{definition}

In the best-response agent model, the cost function completely dictates what
actions are rational for the agent to undertake and occupies a central modeling
challenge. We discuss this further in Section~\ref{sec:cost_reduction}. Our
definition of the cost function in terms of an action set $\A$ is motivated by
~\citet{ustun2019actionable}. However, this formulation is completely equivalent
to the agent-models considered in other
work~\citep{hardt2016strategic,dong2018strategic}. In contrast to prior work,
our main results only require that individuals approximately best-respond to the
classifier. 

\begin{definition}[Approximate best-response]
    For any $\eps \in (0, 1)$, say $\Delta_\eps(x, f) = x + \tilde{a}$ is an
    $\eps$- best-response to classifier $f$ if $f(x + \tilde{a}) - c(\tilde{a}; x) \geq
    \eps \cdot (\max_{a} f(x + a) - c(a; x))$.
\end{definition}

\subsection{The Causal Model}
While the agent model specifies which actions the agent takes in response to the
classifier, the causal model describes how these actions effect the individual's true label.

Returning to the hiring example, suppose individuals decide increase their
open-source contributions, $X$. Does this improve their software engineering
skill, $Y$?  There are two different causal graphs that explain this scenario.
In one scenario, $Y \rightarrow X$: the more skilled one becomes, the more
likely one is to contribute to open-source projects. In the other scenario, $X
\rightarrow Y$: the more someone contributes to open source, the more skilled
they become.  Only in the second world, when $X \rightarrow Y$, do adaptations
that increase open-source contributions raise the candidate's skill.  

More formally, recall that a structural causal model has two types of nodes:
endogenous nodes and exogenous nodes. In our model, the endogenous nodes are the
individual's true label $Y$, their features $X = \{X_1, \dots, X_n\}$, and their
classification outcome $\hat{Y}$.  The structural equation for $\hat{Y}$ is
represented by the classifier $\hat{Y} = f(Z)$, where $Z \subseteq X$ are the
features that the classifier $f$ has access to and uses.  The exogenous
variables $U$ represent all the other unmodeled factors. 

For an individual with features $X = x$, let $\Delta(x, f)$ denote the agent's
response to classifier $f$. Since the agent chooses $\Delta(x, f)$ as a function
of the observed features $x$, the label after adaptation is a
\emph{counterfactual} quantity. This, we model the individual's adaptation as an
intervention in the submodel \emph{conditioned on observing features} $X=x$.
What value would the label $Y$ take if the individual had features $\Delta(X,
f)$, given that her features were originally $X$? 

Formally, let $A = \{i: \Delta(x, f)_i \neq x_i\}$ be the subset of features the
individual adapts, and let $X_A$ index those features. Then, the label after
adaptation is given by $Y_{X_A \coloneqq \Delta(x, f)_A}(\set{X = x})$. The
dependence on $A$ ensures that, if an individual only intervenes on a subset of
features, the remaining features are still consistent with the original causal
model. For brevity, we omit reference to $A$ and write $Y_{X\coloneqq\Delta(x,
f)}(\set{X=x})$.  In the language of potential outcomes, both $X$ and $Y$ are completely
deterministic given the exogenous variables $U=u$, and we can express the label
under adaptation as $Y_{X \coloneqq \Delta(x, f)}(u)$. 

Much of the prior literature in strategic classification eschews explicit causal
terminology and instead posits the existence of a ``qualification function'' or a
``true binary classifier'' $h: \X \to \Y$ that maps the individual's features to their ``true
quality''~\citep{hardt2016strategic,hu2019disparate,garg2019role,haghtalab2020maximizing}.
Such a qualification function should be thought of as the strongest possible
causal model, where $X$ is causal for $Y$, and the structural equation
determining $Y$ is completely deterministic.

\subsection{Evaluating Incentives}
Equipped with both the agent model and the causal model, we can formally
characterize the incentives induced by a decision rule $f$. Key to our
categorization is the notion of \emph{improvement}, which captures how 
the classifier induces agents to change their label on average over the
population baseline.

\begin{definition}
    \label{def:improvement}
    For a classifier $f$ and a distribution over features $X$ and label $Y$
    generated by a structural causal model, define the \emph{improvement}
    incentivized by $f$, as
    \begin{align}
        \improve(f) = 
        \ex_X \ex \brack{Y_{X \coloneqq \Delta(x, f)}(\set{X=x})} - \ex \brack{Y}.
    \end{align}
    If $\improve(f) > 0$, we say that $f$ \emph{incentivizes improvement}. Otherwise,
    we say that $f$ \emph{incentivizes gaming}.
\end{definition}

By the tower property, definition~\ref{def:improvement} can be equivalently
written in terms of potential outcomes $\improve(f) = \ex_U \brack{Y_{X
\coloneqq \Delta(x, f)}(U) - Y(U)}$. In this view, if we imagine exposure to the
classifier $f$ as a treatment, then improvement is the \emph{treatment effect of
exposure to classifier $f$ on the label $Y$}. In general, since all
individuals are exposed and adapt to the classifier in our model, and
estimating improvement becomes an exercise in estimating the effect of treatment
on the treated, and identifying assumptions are provided
in~\citet{shpitser2009effects}. Our notion of improvement is closely related to
notion of ``gain'' discussed in~\citet{haghtalab2020maximizing}, albeit with a
causal interpretation. We can similarly characterize improvement at the level of
the individuals.

\begin{definition}
    \label{def:ind_improve}
    For a classifier $f$ and a distribution over features $X$ and label $Y$
    generated by a structural causal model, define the \emph{improvement}
    incentivized by $f$ for an individual with features $x$ as
	\begin{align*}
        I(f; x) = \ex \brack{Y_{X \coloneqq \Delta(x, f)}(\set{X=x})} - \ex \brack{Y \mid X=x}.
	\end{align*}
\end{definition}

At first glance, the causal model and Definition~\ref{def:improvement} appear
to offer a convenient heuristic for determining whether a classifier incentivizes
gaming. Namely, does the classifier rely on non-causal features?
However, even a classifier that uses purely non-causal features can still
incentivize improvement if manipulating upstream, causal features is less costly
than directly manipulating the non-causal features. The following example
formalizes this intuition. Thus, reasoning about improvement requires
considering both the agent model and the causal model.

\begin{example}
    Suppose we have a structural causal model with features $X, Z$ and label $Y$
    distributed as $X \coloneqq U_X$, $Y \coloneqq X + U_Y$, and $Z \coloneqq Y
    + U_Z$, where $U_X, U_Y, U_Z \simiid \mathcal{N}(0, 1)$.  Let the classifier
    $f$ depend only on the non-causal feature, $Z$, $f(z) = \hat{y}$. Let $\A =
    \mathbb{R}^2$, and define the cost function $c(a; x) = (1/2)a^\top C a$,
    where $C \succ 0$ is a symmetric, positive definite matrix with $\det(C) =
    1$. Then, direct computation shows $\Delta(x, z; f) = (x - C_{12}, z +
    C_{11})$, and $I(f) = -C_{12}$. Hence, provided $C_{12} < 0$, $f$
    incentivizes improvement despite only rely on non-causal features.  When
    $C_{12} < 0$ changing $x$ and $z$ jointly is less costly than manipulating
    $z$ alone. This \emph{complementarity}~\citep{holmstrom1991multitask} allows the
    decision-maker to incentivize improvement using only a non-causal feature.
    This example is illustrated in Figure~\ref{fig:agent_model}.
\end{example}

\begin{figure}
	\centering
    \includegraphics[scale=0.4]{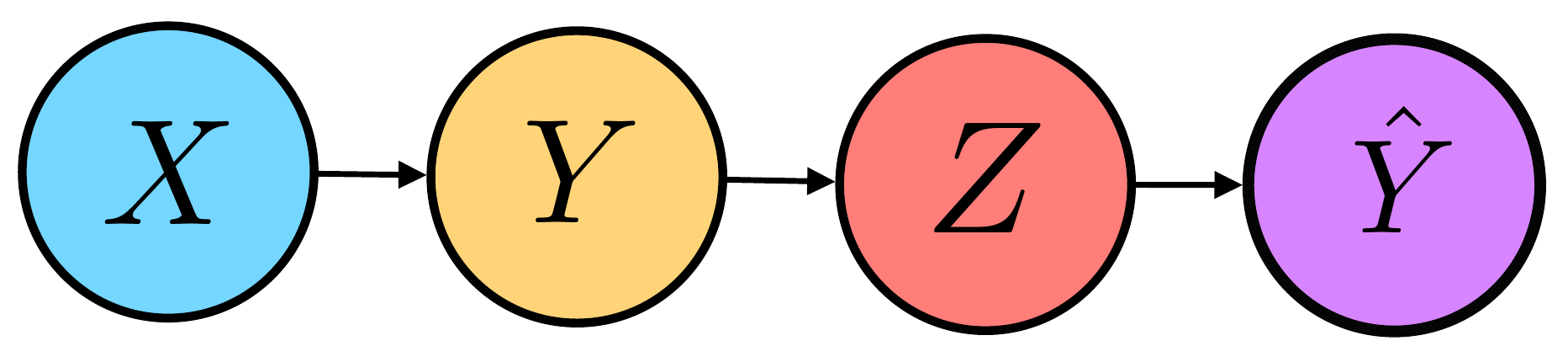}
    \caption{Reasoning about incentives requires both the agent-model and the
    causal model. The cost function plays a central role in the agent-model.
    Even though the classification $\hat{Y}$ only depends on the non-causal
    feature $Z$, the agent can change the label by manipulating, $X$, $Z$ or
    both, depending on the cost function. The causal model determines how the
    agent's adaptation affects the target measure, but the agent model, and in
    turn the cost function, determines which actions the agent actually takes.}
    \label{fig:agent_model}
\end{figure}

\section{Incentivizing Improvement Requires Causal Modeling}
\label{sec:good_incentives}
Beyond evaluating the incentives of a particular classifier, recent work has
sought to \emph{design} classifiers that explicitly incentivize
improvement. \citet{haghtalab2020maximizing} seeks classifiers that
\emph{maximize} the improvement of strategic individuals according to some
quality score. Similarly, both~\citet{kleinberg2019classifiers}
and~\citet{alon2019multiagent} construct decision-rules that incentivize 
investment in a desired ``effort profile'' that ultimately leads to individual
improvement. In this section, we show that when these approaches succeed in
incentivizing improvement, they must also solve a non-trivial causal modeling
problem. Therefore, while they may not explicitly discuss causality, much of this
work is \emph{necessarily} performing causal reasoning.

\subsection{The Good Incentives Problem}
We first formally state the problem of designing classifiers that incentivize
improvement, which we call the \emph{good incentives problem}. Consider the
hiring example presented in Section~\ref{sec:framework}. A decision-maker has
access to a distribution over features (open-source contributions, employment
history, coding test scores, etc), a label (engineering ability), and wishes to
design a decision rule that incentivizes strategic individuals to improve their
engineering ability.  As discussed in Section~\ref{sec:framework}, the
decision-maker must reason about the agent model governing adaptation, and we
assume agent's \emph{approximately} best-respond according to some specified
cost function.

\begin{definition}[Good Incentives Problem]
    Assume agents $\eps$-best-respond to the classifier for some $\eps>0$.
    Given:
    \begin{enumerate}
    \item A joint distribution $P_{X, Y}$ over examples $(x, y) \in \X \times \Y$  entailed by
    structural causal model, and
    \item A cost function  $c\colon \A \times \X \to \mathbb{R}_{+}$,
    \end{enumerate}
    Find a classifier $f\opt\colon \X \to \Y$ that incentivizes improvement, i.e.
    find a classifier with $\improve(f\opt) > 0$. If no such classifier exists,
    output $\fail$.
\end{definition}

The good incentives problem is closely related to the improvement problem
studied in~\citet{haghtalab2020maximizing}. Translated into our framework,
\citet{haghtalab2020maximizing} seek classifiers that optimally incentivize
improvement and solve $\max_{f} \improve(f)$, which is a more difficult problem
than finding \emph{some} classier that leads to improvement.  

In the sequel, let $\gi$ be an oracle for the Good Incentives problem.
$\gi$ takes as input a cost function and a joint distribution over features and
label, and either returns a classifier that incentivizes improvements or returns
no such classifier exists.

\subsection{A Reduction From Causal Modeling to Designing Good Incentives}
Incentivizing improvement requires both (1) knowing which actions lead to
improvement, and (2) incentivizing individuals to take those actions. Since only
adaptation of causal features can affect the true label $Y$, determining which
actions lead to improvement necessitates distinguishing between causal and
non-causal features. Consequently, any procedure that can provide incentives for
improvement must capture some, possibly implicit, knowledge about the causal
relationship between the features and the label.

The main result of this section generalizes this intuition and establishes a
reduction from orienting the edges in a causal graph to designing classifiers
that incentivize improvement. Orienting the edges in a causal graph is not
generally possible from observational data alone \citep{peters2017elements},
though it can be addressed through active intervention
\citep{eberhardt2005number}. Therefore, any procedure for constructing
classifiers that incentivize improvement must at its core also solve a
non-trivial causal discovery problem.  

We prove this result under a natural assumption: improvement is always possible
by manipulating causal features. In particular, for any edge $V \to W$ in the
causal graph, there is always \emph{some} intervention on $V$ a strategic agent
can take to improve $W$. We formally state this assumption below, and, as a
corollary, we prove this assumption holds in a broad family of causal graphs:
additive noise models.

\begin{assumption}
    \label{assump:control}
    Let $G = (X, E)$ be a causal graph, let $X_{-W}$ denote the random
    variables $X$ excluding node $W$. For any edge $(V, W) \in E$ with $V \to W$,
    there exists a real-valued function $h$ mapping $X_{-w}$ to an intervention
    $v\opt = h(x_{-w})$ so that
    \begin{align}
        \ex_{X_{-W}} \ex\brack{W_{V \coloneqq h(x_{-w})}\paren{\set{X_{-W} = x_{-w}}}} 
        > \ex \brack{W}.~\label{eq:control}
    \end{align}
\end{assumption}
Importantly, the intervention $v\opt= h(x_{-w})$ discussed in
Assumption~\ref{assump:control} is an intervention in the counterfactual
model, conditional on observing $X_{-W} = x_{-w}$. In strategic
classification, this corresponds to choosing the adaptation conditional on the
values of the observed features. Before proving Assumption~\ref{assump:control}
holds for faithful additive noise models, we first state and prove the main
result.

Under Assumption~\ref{assump:control}, we exhibit a reduction from orienting the
edges in a causal graph to the good incentives problem. While
Assumption~\ref{assump:control} requires Equation~\eqref{eq:control} to hold for
every edge in the causal graph, it is straightforward to modify the result when
Equation~\eqref{eq:control} only holds for a subset of the edges.

\begin{theorem}
    \label{thm:causal_reduction}
    Let $G = (X, E)$ be a causal graph induced by a structural causal model that
    satisfies Assumption~\ref{assump:control}. Assume $X$ has bounded support
    $\X$. Given the skeleton of $G$, using $|E|$ calls to $\gi$, we can orient
    all of the edges in $G$.
\end{theorem}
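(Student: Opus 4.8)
The plan is to orient the skeleton one edge at a time, spending exactly one call to $\gi$ per edge and reading the orientation off a single bit: whether the call succeeds or returns $\fail$. For a fixed skeleton edge $\set{V,W}$ I would build a good incentives instance in which the node $W$ is designated the target label, the remaining nodes are the features, the distribution is the observational one entailed by the given structural causal model, and the cost function is chosen so that adapting the coordinate $V$ alone is free while any adaptation touching a different coordinate costs a constant $B$. Since $\X$ is bounded, $W$ ranges over a bounded interval and every admissible classifier $f\colon\X\to\Y$ is bounded, so $B$ can be taken large enough that no $\eps$-best response ever moves a coordinate other than $V$; hence in this instance every agent adapts $V$ and nothing else. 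I would then call $\gi$ on the instance and orient $V\to W$ if it returns a classifier, and $W\to V$ if it returns $\fail$. Iterating over all $|E|$ skeleton edges uses $|E|$ calls.

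Soundness of the $\fail$ verdict is the easy direction. If the true orientation is $W\to V$, then by acyclicity $W$ is not a descendant of $V$, so $\ex\brack{W_{V\coloneqq v'}\paren{\set{X_{-W}=x_{-w}}}}=\ex\brack{W\mid X_{-W}=x_{-w}}$ for every intervention value $v'$ and every $x_{-w}$; hence $\improve(f)=0$ for every classifier $f$, no classifier attains $\improve(f)>0$, and $\gi$ must return $\fail$. Contrapositively, whenever $\gi$ returns $f$ with $\improve(f)>0$, that strictly positive improvement is — since only $V$ is adapted — exactly the average counterfactual change in $W$ induced by an intervention on $V$, which can be nonzero only if $W$ is a descendant of $V$; together with the facts that $\set{V,W}$ is an edge and the graph is acyclic, this forces $V\to W$.

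The substantive direction — and the main obstacle — is completeness: I must show that when the true orientation is $V\to W$, $\gi$ does not return $\fail$, i.e.\ that some classifier attains $\improve(f)>0$. This is exactly where Assumption~\ref{assump:control} enters: it supplies a conditional intervention $h$ on $V$ for which the left-hand side of~\eqref{eq:control} strictly exceeds $\ex\brack{W}$, and the task reduces to realizing $h$ as the (approximate) best response to a concrete classifier. Two wrinkles have to be dealt with. First, the classifier observes only the agent's post-adaptation features, so once $V$ is moved it cannot read the agent's original value of $V$, although $h$ may depend on it; I would circumvent this by running the reduction on an equivalent instance carrying a read-only duplicate of $V$ among the features (a leaf node copying $V$ that the cost function forbids adapting), letting the classifier condition on the original $V$. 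Second, agents only $\eps$-best-respond, so the reward at the target value must be sharp: after harmlessly re-centering the label so that $\Y$ straddles $0$, a two-valued ``bump'' classifier paying $\sup\Y$ exactly at $V=h(x_{-w})$ and $\inf\Y$ everywhere else (with $B$ as above) has the property that its \emph{only} $\eps$-best response moves $V$ precisely to $h(x_{-w})$. For this classifier $\improve(f)$ equals the left-hand side of~\eqref{eq:control} minus $\ex\brack{W}$, which is strictly positive by Assumption~\ref{assump:control}, so $\gi$ returns a classifier and the edge is correctly oriented.

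Combining the two directions, each of the $|E|$ calls recovers the orientation of one skeleton edge, for $|E|$ calls in total; every instance is built in polynomial time from the skeleton and the input distribution, so the reduction is efficient. The crux is the completeness argument: Assumption~\ref{assump:control} is precisely what makes ``the edge points from $V$ to $W$'' detectable by an improvement oracle, and the two devices — the duplicate feature and the sharp bump classifier under a re-centered label — are what let an \emph{approximately} best-responding agent carry out the specific intervention the assumption guarantees.
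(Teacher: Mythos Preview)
Your proposal is correct and follows essentially the same approach as the paper: one oracle call per skeleton edge with the chosen node as label, a cost function that makes adaptation free on $V$ and prohibitively expensive elsewhere, the read-only duplicate of $V$ so the classifier can see the pre-adaptation value, and a sharp two-valued ``bump'' classifier that forces every $\eps$-best-responding agent to move $V$ to exactly $h(x_{-w})$. The only cosmetic difference is that the paper uses the indicator $f=\1{x_i=h(\tilde{x}_{-j})}\in\{0,1\}$ directly, whereas you re-center the label and use $\{\inf\Y,\sup\Y\}$; your version is arguably more careful about the constraint $f\colon\X\to\Y$.
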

\begin{proof}[Proof of Theorem~\ref{thm:causal_reduction}.]
The reduction proceeds by invoking the good incentives oracle for each edge
$(X_i, X_j)$, taking $X_j$ as the label and using a cost function that ensures
only manipulations on $X_i$ are possible for an $\eps$-best-responding agent. If
$X_i \rightarrow X_j$, then Assumption~\ref{assump:control} ensures that improvement is
possible, and we show $\gi$ must return a classifier that incentivizes
improvement.  Otherwise, if $X_i \leftarrow X_j$, no intervention on $X_i$
can change $X_j$, so $\gi$ must return $\fail$.

More formally, let $X_i-X_j$ be an undirected edge in the skeleton $G$. We show
how to orient $X_i - X_j$ with a single oracle call. Let $X_{-j} \triangleq X
\setminus \set{X_j}$ be the set of features excluding $X_j$, and let $x_{-j}$
denote an observation of $X_{-j}$.

Consider the following good incentives problem instance. Let $X_j$ be the label,
and let the features be $(X_{-j}, \tilde{X}_i)$, where $\tilde{X}_i$ is an
identical copy of $X_i$ with structural equation $\tilde{X}_i \coloneqq X_i$.
Let the action set $\A = \mathbb{R}^n$, and let $c$ be a cost function that
ensures an $\eps$-best-responding agent will only intervene on $X_i$. In
particular, choose 
\begin{align}
    c(a; (x_{-j}, \tilde{x}_i)) = 2 B\1{a_k \neq 0 \text{ for any $k \neq i$}},
\end{align}
where $B = \sup\set{\norm{x}_\infty : x \in \X}$. In other words, the individuals pays
no cost to take actions that only affect $X_i$, but otherwise pays cost $2B$.
Since every feasible classifier $f$ takes values in $\X$, $f(x) \leq B$, and 
any action $a$ with $a_k \neq 0$ leads to negative agent utility.
At the same time, action $a=0$ has non-negative utility, so an
$\eps$-best-responding agent can only take actions that affect $X_i$.

We now show $\gi$ returns $\fail$ if and only if $X_i \gets X_j$. 
First, suppose $X_i \gets X_j$. Then $X_i$ is not a parent nor an ancestor of
$X_j$ since if there existed some $X_i \leadsto Z  \leadsto X_j$ path, then $G$
would contain a cycle. Therefore, no intervention on $X_i$ can change the
expectation of $X_j$, and consequently no classifier that can incentivize
improvement exists, so $\gi$ must return $\fail$.

On the other hand, suppose $X_i \to X_j$. We explicitly construct a classifier
$f$ that incentivizes improvement, so $\gi$ cannot return $\fail$.
By Assumption~\ref{assump:control}, there exists a function $h$ so that
\begin{align}
    \ex_{X_{-j}} \ex \brack{{X_j}_{\brack{X_i \coloneqq
    h(x_{-j})}}\paren{\set{X_{-j} = x_{-j}}}} > \ex \brack{X_j}.\label{eq:manip}
\end{align}
Since $\tilde{X}_i \coloneqq X_i$, Assumption~\ref{assump:control} still holds
additionally conditioning on $\tilde{X}_i = \tilde{x}_i$. Any classifier that
induces agents with features $(x_{-j}, \tilde{x}_i)$ to
respond by adapting only $X_i \coloneqq h(x_{-j})$ will therefore incentivize
improvement. The intervention $X_i \coloneqq h(x_{-j})$ given $X_{-j} = x_{-j}$
is incentivizable by the classifier
\begin{align}
    f((x_{-j}, \tilde{x}_i)) = \1{x_i = h(\tilde{x}_{-j})},
\end{align}
where $\tilde{x}_j$ indicates that $x_i$ is replaced by $\tilde{x}_i$ in the
vector $x_{-j}$.

An $\eps$-best-responding agent will choose action $a\opt$ where $a_i\opt =
h(\tilde{x}_{-j}) - x_i$ and otherwise $a_k\opt = 0$ in response to $f$. To see
this, $a\opt$ has cost $0$. Since $\tilde{X}_i \coloneqq X_i$, we initially have
$x_{i} = \tilde{x}_{i}$. Moreover, by construction, $h(\tilde{x}_{-j})$ depends
only on the feature copy $\tilde{x}_i$, not $x_i$, so $h(\tilde{x}_{-j})$ is
invariant to adaptations in $x_i$. Therefore, $h(\tilde{x}_{-j} + a\opt_{-i}) =
h(\tilde{x}_{-j}) = x_i + a\opt_i$, so $f((x_{-j}, \tilde{x}_i) + a\opt) = 1$.
Thus, action $a\opt$ has individual utility $1$, whereas all other actions have
zero or negative utility, so any $\eps$-best responding agent will choose
$a\opt$.  Since all agents take $a\opt$, it then follows by construction that
$\improve(f) > 0$.

Repeating this procedure for each edge in the causal graph thus fully orients
the skeleton with $|E|$ calls to $\gi$.
\end{proof}

We now turn to showing that Assumption~\ref{assump:control} holds in a large
class of nontrivial causal model, namely additive noise models
\citep{peters2017elements}.

\begin{definition}[Additive Noise Model]
    A structural causal model with graph $G = (X, E)$ is an additive noise model
    if the structural assignments are of the form
    \begin{align}
        X_j := g_j(\parents_j) + U_j \quad \text{for } j =1, \dots, n\; .
    \end{align}
    Further, we assume that all nodes $X_i$ are
    non-degenerate and that their joint distribution has a strictly positive
    density.\footnote{
        The condition that the nodes $X$ have a strictly positive
        density is met when, for example, the functional relationships $f_i$ are
        differentiable and the noise variables $U_i$ have a strictly positive
        density \citep{peters2017elements}.}
\end{definition}

Before stating the result, we need one additional technical assumption, namely
faithfulness. The faithfulness assumption is ubiquitous in causal graph
discovery setting and rules out additional conditional independence statements
that are not implied by the graph structure. For more details and a precise
statement of the d-separation criteria, see \citet{pearl2009causality}.

\begin{definition}[Faithful]
    A distribution $P_X$ is \emph{faithful} to a DAG $G$ if $A \indep B \mid C$
    implies that $A$ and $B$ are d-separated by $C$ in $G$
\end{definition}

\begin{proposition}
    \label{prop:anm}
    Let $(X_1, \dots, X_n)$ be an additive noise model, and let the joint
    distribution on $(X_1, \dots, X_n)$ be faithful to the graph $G$.
    Then, $G$ satisfies Assumption~\ref{assump:control}.
\end{proposition}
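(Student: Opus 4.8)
The plan is to verify Assumption~\ref{assump:control} edge by edge: fix an edge $V \to W$ in $G$ and exhibit a function $h$ mapping the remaining variables $x_{-w}$ to an intervention $v\opt = h(x_{-w})$ whose counterfactual effect, averaged over $X_{-W}$, strictly raises the expectation of $W$. The natural candidate is to intervene on $V$ so as to push $W$ in whichever direction the local mechanism $g_W$ increases. Writing the additive-noise assignment $W := g_W(\parents_W) + U_W$ and noting $V \in \parents_W$, the counterfactual value of $W$ given $X_{-W} = x_{-w}$ and intervention $V \coloneqq v$ is $g_W\bigl(v, (\parents_W \setminus \{V\})\bigr) + u_W$, where the non-$V$ parents and the noise $u_W$ are pinned down (or conditioned) by the event $X_{-W} = x_{-w}$, since all parents of $W$ except $V$ lie in $X_{-W}$, and $U_W$ is recoverable from $W$ and its parents — but $W$ is exactly the node we excluded, so in fact $U_W$ is \emph{not} determined and remains distributed according to its prior (this is where we use that the SCM is Markovian and $W$ has no descendants among the conditioning set, because $G$ is a DAG and conditioning on $X_{-W}$ gives no information about $U_W$ beyond its prior, as $U_W$ is independent of all of $\parents_W$ and of all other exogenous variables).

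Given this, define $h(x_{-w})$ to be a value of $v$ achieving $\sup_v g_W\bigl(v, \text{pa}_{-V}(x_{-w})\bigr)$, or, if the sup is not attained, a value making $g_W$ exceed any prescribed threshold; here $\text{pa}_{-V}(x_{-w})$ denotes the coordinates of $x_{-w}$ corresponding to the parents of $W$ other than $V$. Then
\begin{align*}
    \ex\bigl[W_{V \coloneqq h(x_{-w})}(\set{X_{-W} = x_{-w}})\bigr]
    = g_W\bigl(h(x_{-w}), \text{pa}_{-V}(x_{-w})\bigr) + \ex[U_W]
    \geq \ex\bigl[g_W(\parents_W) + U_W \mid X_{-W} = x_{-w}\bigr],
\end{align*}
and taking expectations over $X_{-W}$ recovers $\ex[W]$ on the right-hand side. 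To get \emph{strict} inequality after averaging, it suffices that the event ``$g_W(\cdot, \text{pa}_{-V}(x_{-w}))$ is genuinely non-constant in its first argument, and the original $V$ is not almost surely at the maximizer'' has positive probability. This is precisely what faithfulness buys us: if $g_W$ did not actually depend on $V$ (or depended in a way invisible to the distribution), then $W$ would be independent of $V$ given the other parents, contradicting that $V \to W$ is an edge and that $P_X$ is faithful to $G$ — d-separation would fail. The non-degeneracy and strictly-positive-density assumptions ensure the conditioning events have positive measure and that $V$ is genuinely spread out, so it cannot concentrate on an argmax.

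The main obstacle I expect is making the strictness argument fully rigorous when $g_W$ need not attain its supremum (e.g. $g_W$ monotone and unbounded in $v$): one cannot literally ``set $v$ to the argmax.'' I would handle this by a truncation/limiting argument — choose $h(x_{-w})$ to drive $g_W$ above $\ex[g_W(\parents_W) \mid X_{-W}=x_{-w}] + \delta$ for a small fixed $\delta > 0$ on a positive-probability set of $x_{-w}$, using boundedness of the support $\X$ (inherited from the theorem's hypothesis, or argued directly from non-degeneracy) to control the relevant quantities, and verifying the resulting $h$ is real-valued and measurable. A secondary subtlety is confirming that conditioning on $X_{-W}=x_{-w}$ really does leave $U_W$ at its prior; this follows because in a Markovian model $U_W \indep (U_{-W}, \parents_W)$ and $X_{-W}$ is a deterministic function of $U_{-W}$ alone (since $W \notin X_{-W}$ and the graph is acyclic), so no backdoor path carries information from $X_{-W}$ to $U_W$. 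With these pieces assembled, Equation~\eqref{eq:control} holds for every edge, establishing the proposition.
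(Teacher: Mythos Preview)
Your high-level strategy matches the paper's: condition on $X_{-W}=x_{-w}$, express the counterfactual $W_{V\coloneqq v}$ via the additive-noise structure, invoke faithfulness for genuine dependence on $v$, and choose $h$ to push the mechanism upward. Two concrete gaps, however, separate your sketch from a correct argument.

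First, you write the counterfactual as $g_W\bigl(v,\mathrm{pa}_{-V}(x_{-w})\bigr)+U_W$, treating the non-$V$ parents of $W$ as pinned at their observed values. This fails whenever $V$ has a descendant among those other parents (e.g.\ $V\to P\to W$ alongside the direct edge $V\to W$): in the intervened SCM such parents are \emph{recomputed} from $v$ and the exogenous noise, not held at the coordinates of $x_{-w}$. The paper handles this by first recovering, from $x_{-w}$, the ancestral noise terms $u_{\ancestors}$ (possible in an additive noise model since $u_j=x_j-g_j(\parents_j)$) and then writing $g_W(\parents_W)=\tilde g_W(v,u_{\ancestors})$, a function that correctly propagates the intervention through every indirect $V\leadsto W$ path. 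Faithfulness is then applied to $\tilde g_W$ to obtain non-constancy in $v$, and $h$ is taken to be the $v$-argmax of $\tilde g_W(\cdot,u_{\ancestors})$.

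Second, your claim that $X_{-W}$ is a deterministic function of $U_{-W}$ alone---and hence that $U_W$ remains at its prior after conditioning---is false when $W$ has descendants: any descendant $D\in X_{-W}$ depends on $U_W$ through $W$, so conditioning on $X_{-W}$ \emph{does} carry information about $U_W$. Your pointwise inequality therefore need not hold at each $x_{-w}$. The remedy is to compare only after the outer expectation over $X_{-W}$: by the tower property $\ex_{X_{-W}}\ex[U_W\mid X_{-W}]=\ex[U_W]$, so the noise contribution matches the one inside $\ex[W]$ exactly, and strictness of~\eqref{eq:control} then reduces to $\ex\bigl[\tilde g_W(h(U_{\ancestors}),U_{\ancestors})\bigr]>\ex\bigl[\tilde g_W(V,U_{\ancestors})\bigr]$, which follows from the argmax choice together with the positive-probability non-constancy supplied by faithfulness.
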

\arxiv{
\begin{proof}
For intution, we prove the result in the two-variable case and defer the full
proof to Appendix~\ref{sec:app}. Suppose $X \to Y$, so that $Y := g_Y(X) + U_Y$.
Since $X$ is non-degenerate, $X$ takes at least two values with positive
probability. Moreover, since the distribution is faithful to $G$, $g_Y$ cannot
be a constant function, since otherwise $Y \indep X$. Define $x\opt \in \argmax_x
g_Y(x)$. Then, we have
\begin{align}
    \ex_{X} \ex \brack{{Y}_{X \coloneqq x\opt}\paren{\set{X = x}}}
    = \ex_{X} \ex \brack{g_Y(x\opt) + U_Y \mid X=x}
    > \ex \brack{g_Y(X) + U_Y}
    = \ex\brack{Y}.
\end{align}
\end{proof}
}{
The proof of Proposition~\ref{prop:anm} is deferred to the appendix.
}

On the other hand, Assumption~\ref{assump:control} can indeed fail in
non-trivial cases.
\begin{example}
Consider a two variable graph with $X \to Y$. Let $Y = \eps X$ where $X$ and
$\eps$ are independent and $\ex\brack{\eps} = 0$. In general, $X$ and $Y$ are
    not independent, but for any $x, x'$, $\ex\brack{Y_{X:=x'}(\set{X=x})} = 
    x' \ex\brack{\eps} = 0 = \ex\brack{Y}$.
\end{example}

\section{Designing Good Cost Functions Requires Causal Modeling}
\label{sec:cost_reduction}
The cost function occupies a central role in the best-response agent model and
essentially determines which actions the individual undertakes. Consequently,
not few works in strategic classification model individuals as behaving
according to cost functions with desirable properties, among which is a natural
\emph{monotonicity} condition---actions that raise an individual's underlying
qualification are more expensive than those that do not.  In this section, we
prove an analogous result to the previous section and show constructing these
cost functions also requires causal modeling. 

\subsection{Outcome-Monotonic Cost Functions}
Although they use all slightly different language, \citet{milli2019social},
\citet{khajehnejad2019optimal}, and \citet{garg2019role} all assume the cost
function is well-aligned with the label. Intuitively, they both assume (i)
actions that lead to large increases in one's qualification are more costly than
actions that lead to small increases, and (ii) actions that decrease or leave
unchanged one's qualification have no cost. \citet{garg2019role} define these
cost functions using an arbitrary qualification function that maps features $X$
to label $Y$, while \citet{milli2019social} and \citet{khajehnejad2019optimal}
instead use the \emph{outcome-likelihood} $\Pr(y\mid x)$ as the qualification
function.  \citet{khajehnejad2019optimal} explicitly assume a causal
factorization so that $\Pr(y\mid x)$ is invariant to interventions on $X$, and
the qualification function of~\citet{garg2019role} ensures a similar causal
relationship between $X$ and $Y$. Translating these assumptions into the causal
framework introduced in Section~\ref{sec:framework}, we obtain a class of
\emph{outcome-monotonic} cost functions.

\begin{definition}[Outcome-monotonic cost]
    A cost function $c: \A \times \X \to \mathbb{R}_+$ is
    \emph{outcome-monotonic}
    if, for any features $x \in \X$:
    \begin{enumerate}
        \item For any action $a \in \A$, $c(a; x) = 0$ if and only if
        $\mathbb{E} \brack{Y_{X \coloneqq x + a}(\set{X = x})} \leq \mathbb{E}[Y
        \mid X=x]$.
        \item For pair of actions $a, a' \in \A$, $c(a; x) \leq c(a', x)$ if and
        only if
        \begin{align}
            \mathbb{E} \brack{Y_{X \coloneqq x + a}(\set{X = x})} \leq \mathbb{E}
        \brack{Y_{X \coloneqq x + a'}(\set{X = x})}.
        \end{align}
    \end{enumerate}
\end{definition}

While several works assume the decision-maker has access to an outcome-monotonic
cost, in general the decision-maker must explicitly construct such a cost
function from data. This challenge results in the following problem.

\begin{definition}[Learning outcome-monotonic cost problem]
    Given action set $\A$ and a joint distribution $P_{X, Y}$ over a set of
    features $X$ and label $Y$ entailed by a structural causal model, construct
    an outcome-monotonic cost function $c$.
\end{definition}

\subsection{A Reduction From Causal Modeling to Constructing Outcome-Monotonic Costs}
Outcome-monotonic costs are both conceptually
desirable~\citep{milli2019social,garg2019role} and algorithmically
tractable~\citep{khajehnejad2019optimal}. Simultaneously, outcome-monotonic cost
functions encode significant causal information, and the main result of this
section is a reduction from orienting the edges in a causal graph to learning
outcome-monotonic cost functions under the same assumption as
Section~\ref{sec:good_incentives}. Consequently, any procedure that can
successfully construct outcome-monotonic cost functions must inevitably solve a
non-trivial causal modeling problem.

\begin{proposition}
    \label{prop:cost_reduction}
    Let $G = (X, E)$ induced by a structural causal model that satisfies
    Assumption~\ref{assump:control}.  Let $\gc$ be an oracle for the
    outcome-monotonic cost learning problem.  Given the skeleton of $G$, $|E|$ calls
    to $\gc$ suffices to orient all the edges in $G$.
\end{proposition}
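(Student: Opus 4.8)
The plan is to follow the template of the proof of Theorem~\ref{thm:causal_reduction}, invoking $\gc$ once per edge of the skeleton, with two modifications forced by the different setup: since the outcome-monotonic cost problem involves no classifier and no responding agent, I would restrict the action set directly rather than through a penalty cost; and since $\gc$ always returns a cost function rather than ever reporting $\fail$, I would recover the edge orientation by inspecting the returned cost. For a fixed undirected edge $X_i - X_j$ of the skeleton, I would build the following instance of the outcome-monotonic cost problem: take $X_j$ as the label, the remaining variables $X_{-j} \triangleq X \setminus \set{X_j}$ as the features (their joint distribution is entailed by the given SCM), and the action set $\A$ consisting of interventions that modify only the coordinate $X_i$; then call $\gc$ to obtain an outcome-monotonic cost $c$.

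The crux is that property~1 in the definition of outcome-monotonicity pins the zero set of $c$ down exactly: $c(a; x_{-j}) = 0$ if and only if $\ex\brack{{X_j}_{X_i \coloneqq x_i + a_i}\paren{\set{X_{-j} = x_{-j}}}} \le \ex\brack{X_j \mid X_{-j} = x_{-j}}$. Suppose first the edge is oriented $X_j \to X_i$. Then $X_i$ is neither a parent nor an ancestor of $X_j$, since an $X_i \leadsto X_j$ directed path would close a cycle; hence intervening on $X_i$ leaves $X_j$ unchanged as a function of the exogenous variables, the counterfactual on the left equals $\ex\brack{X_j \mid X_{-j} = x_{-j}}$ for every $a \in \A$ and every $x_{-j}$, and property~1 forces $c \equiv 0$ on $\A$. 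Suppose instead the edge is oriented $X_i \to X_j$. Applying Assumption~\ref{assump:control} with $W = X_j$, $V = X_i$, and $X_{-W} = X_{-j}$ yields a function $h$ with $\ex_{X_{-j}}\ex\brack{{X_j}_{X_i \coloneqq h(x_{-j})}\paren{\set{X_{-j} = x_{-j}}}} > \ex\brack{X_j} = \ex_{X_{-j}}\ex\brack{X_j \mid X_{-j} = x_{-j}}$, so there is some observation $x_{-j}$ for which the per-individual counterfactual strictly exceeds $\ex\brack{X_j \mid X_{-j} = x_{-j}}$; for such an $x_{-j}$, the action $a$ with $a_i = h(x_{-j}) - x_i$ lies in $\A$ and has $c(a; x_{-j}) > 0$ by property~1, so $c \not\equiv 0$ on $\A$. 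Thus $c$ vanishes identically on $\A$ precisely when $X_j \to X_i$, which orients the edge, and repeating over all $|E|$ edges of the skeleton uses $|E|$ calls to $\gc$.

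I expect the only delicate point --- playing the role that the clean $\fail$/classifier dichotomy plays in Theorem~\ref{thm:causal_reduction} --- to be this last step of extracting the orientation from $c$, i.e. deciding whether the returned cost is the zero function on the (continuum) action set $\A$. This requires no further oracle calls, and it leans on the rigidity of property~1: the zero set of \emph{any} valid outcome-monotonic cost is determined solely by the causal structure, so every correct output of $\gc$ exhibits the same dichotomy, and one can either invoke this structural fact or assume $c$ is returned in a form on which the test is effective. A secondary thing to check carefully is that the counterfactual ${X_j}_{X_i \coloneqq \cdot}\paren{\set{X_{-j} = x_{-j}}}$ behaves as claimed when we simultaneously condition on $X_i = x_i$ (as part of $X_{-j} = x_{-j}$) and intervene to set $X_i \coloneqq x_i + a_i$; under the ``intervene only on the adapted coordinates, condition on the original features'' convention of Section~\ref{sec:framework}, the intervention overrides the observed value of $X_i$ while the conditioning still fixes the posterior over the exogenous variables, so the non-descendant argument above goes through verbatim.
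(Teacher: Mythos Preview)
Your proposal is correct and follows essentially the same argument as the paper: one call to $\gc$ per edge with label $X_j$, features $X_{-j}$, action set restricted to the $i$-th coordinate, and the dichotomy $c\equiv 0 \iff X_j\to X_i$ established via the non-ancestor argument in one direction and Assumption~\ref{assump:control} plus the tower property in the other. Your discussion of the two ``delicate points'' is additional care the paper omits; the core logic is identical.
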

\begin{proof}
Let $X$ denote the variables in the causal model, and let $X_i - X_j$ be an
undirected edge. We can orient this edge with a single call
to $\gc$. Let $X_{-j} \triangleq X \setminus \set{X_j}$ denote the variables
excluding $X_j$. 

Construct an instance of the learning outcome-monotonic cost problem with features
$X_{-j}$, label $X_j$, and action set $\A = \set{\alpha e_i: \alpha \in
\mathbb{R}}$, where $e_i$ is the $i$-th standard basis vector. In other words,
the only possible actions are those that adjust the $i$-th coordinate. Let $c$
denote the outcome-monotonic cost function returned by the oracle $\gc$. We argue
$c \equiv 0$ if and only if $X_i \leftarrow X_j$.

Similar to the proof of Theorem~\ref{thm:causal_reduction}, if $X_i \leftarrow
X_j$, then $X_i$ can be neither a parent nor an ancestor of $X_j$. Therefore,
conditional on $X_{-j} = x_{-j}$, there is no intervention on $X_i$ that can
change the conditional expectation of $X_j$. Since no agent has a feasible
action that can increase the expected value of the label $X_j$ and the cost function
$c$ is outcome-monotonic, $c$ is identically $0$.

On the other hand, suppose $X_i \to X_j$. Then, by
Assumption~\ref{assump:control}, there is a real-valued function $h$ such that
\begin{align*}
    \ex_{X_{-j}} \ex\brack{{X_j}_{X_{i} \coloneqq h(x_{-j})}\paren{\set{X_{-j}
    = x_{-j}}}} > \ex\brack{X_j}.
\end{align*}
This inequality along with the tower property then implies there is some
agent $x_{-j}$ such that
\begin{align*}
    \ex\brack{{X_j}_{X_{i} \coloneqq h(x_{-j})}\paren{\set{X_{-j} = x_{-j}}}} >
\ex\brack{X_j \mid X_{-j} = x_{-j}}, 
\end{align*}
since otherwise the expectation would be zero or negative.  Since $h(x_{-j})e_i
\in \A$ by construction, there is some action $a \in \A$ that can increase the
expectation of the label $X_j$ for agents with features $x_{-j}$, so $c(a;
x_{-j}) \neq 0$, as required.
\end{proof}

The proof of Proposition~\ref{prop:cost_reduction} makes repeated calls to an
oracle to construct outcome-monotonic cost functions to decode the causal
structure of the graph $G$. In many cases, however, even a single
outcome-monotonic cost function encode significant information about the
underlying graph, as the following example shows.

\begin{example}
Consider a causal model with features $(X, Z)$ and label $Y$ with the following
    structural equations
\begin{align}
    \label{eq:linear_model}
    X_i &\coloneqq U_{X_i} \quad \text{for } i = 1, \dots, n\\
    Y &\coloneqq \sum_{i=1}^n \theta_i X_i + U_Y \\
    Z_j &\coloneqq g_j(X, Y, U_{Z_j}) \quad \text{for } j = 1, \dots, m,\\
\end{align}
for some set of non-zero coefficients $\theta_i \in \mathbb{R}$ and arbitrary
functions $g_j$. In other words, the model consists of $n$ causal features,
$m$ non-causal features, and a linear structural equation for $Y$.

Suppose the action set $\A = \mathbb{R}^{n + m}$, and let $c$ be any
outcome-monotonic cost. Then, $2(n + m)$ queries evaluations of $c$
suffice to determine (1) which features are causal, and (2)
$\sign(\theta_i)$ for $i=1, \dots, n$. To see this,  evaluate the cost
function at points $c(e_i; 0)$ and $c(-e_i; 0)$, where $e_i$ denotes the
$i$-th standard basis vector. Direct
calculation shows 
\begin{align*}
    \mathbb{E} \brack{Y_{(X, Z) \coloneqq e_i}(\set{(X, Z) = 0})}
    =
    \begin{cases}
        \theta_i &\;\;\text{if feature $i$ is causal} \\
        0 &\;\;\text{otherwise.}
    \end{cases}
\end{align*}
Therefore, since $c$ is outcome-monotonic, if $c(e_i; 0) > 0$, then $\sign(\theta_i) =
1$, if $c(-e_i; 0) > 0$, then $\sign(\theta_i) = -1$, and if both $c(e_i; 0) =
0$ and $c(-e_i; 0) = 0$, then feature $i$ is non-causal. 
\end{example}

\section{Discussion}
\label{sec:conclusion}
The large collection of empirical examples of failed incentive schemes is a
testament to the difficulty of designing incentives for individual improvement.
In this work, we argued an important source of this difficulty is that
incentivize design must inevitably grapple with causal analysis.  Our results
are not hardness results per se. There are no fundamental computational or
statistical barriers that prevent causal modeling beyond the standard
unidentifiability results in causal inference. Rather, our work suggests
attempts to design incentives for improvement without some sort of causal
reasoning are unlikely to succeed.

Beyond incentive design, we hope our causal perspective clarifies intuitive,
though subtle notions like gaming and improvement and provides a clear and
consistent formalism for reasoning about strategic adaptation more broadly.

%%
%% The acknowledgments section is defined using the "acks" environment
%% (and NOT an unnumbered section). This ensures the proper
%% identification of the section in the article metadata, and the
%% consistent spelling of the heading.
%\begin{acks}
%To Robert, for the bagels and explaining CMYK and color spaces.
%\end{acks}

%%
%% The next two lines define the bibliography style to be used, and
%% the bibliography file.
\bibliographystyle{plainnat}
\bibliography{refs}

%%
%% If your work has an appendix, this is the place to put it.
\clearpage
\newpage
\onecolumn
\appendix
\section{Missing Proofs}
\label{sec:app}
\begin{proof}[Proposition~\ref{prop:anm}]
Let $V \to W$ be an edge in $G$. We show there exists a real-valued function $h$
that maps a realization of nodes $X_{-W} = x_{-w}$ to an intervention $v\opt$
that increases the expected value of $W$. Therefore, we first condition on
observing the remaining nodes $X_{-W} = x_{-w}$. In an additive noise model,
given $X_{-W} = x_{-w}$ the exogenous noise terms for all of the ancestors of
$W$ can be uniquely recovered. In particular, the noise terms are determined by
\begin{align}
    u_j = x_j - g_j(\parents_j).
\end{align}
Let $U_{\ancestors}$ denote the collection of noise variables for ancestors of
$W$ \emph{excluding} those only have a path through $V$. Both $U_{\ancestors} =
u_{\ancestors}$ and $V=v$ are fixed by $X_{-W} = x_{-w}$. 

Consider the structural equation for $W$, $W = g_W(\parents_{W}) + U_W$.
The parents of $W$, $\parents_{W}$, are deterministic given $V$ and $U_{\ancestors}$. 
Therefore, given $V=v$ and $U_\ancestors=u_\ancestors$, $g_W(\parents_{W})$ is a
deterministic function of $v$ and $u_\ancestors$, which we write $\tilde{g}_W(v,
u_{\ancestors})$.

Now, we argue $\tilde{g}_W$ is not constant in $v$. Suppose $\tilde{g}_W$ were
constant in $v$. Then, for every $u_\ancestors$, $\tilde{g}_W(v, u_\ancestors) =
k(u_\ancestors)$. However, this means $W = k(U_\ancestors) + U_W$, and
$U_\ancestors$ is independent of $V$, so we find that $V$ and $W$ are
independent. However, since $V \to W$ in $G$, this contradicts faithfulness.

Since $\tilde{g}_W$ is not constant in $v$, there exists at least one setting of
$u_\ancestors$ with $v, v'$ so that $\tilde{g}_W(v', u_\ancestors) > \tilde{g}_W(v,
u_\ancestors)$. Since $X$ has positive density, $(v, u_a)$ occurs with positive
probability. Consequently, if $h(u_\ancestors) = \argmax_{v} \tilde{g}_W(v,
u_\ancestors)$, then
\begin{align}
    \ex_{X_{-W}} \ex\brack{W_{V\coloneqq v\opt(u_\ancestors)}\paren{\set{X_{-W} = x_{-w}}}}
    &= \ex_{X_{-W}} \brack{\ex\brack{U_W} + \ex\brack{\tilde{g}_{W}(v\opt(U_A), U_A) \mid X_{-W} = x_{-w}}} \\
    &> \ex\brack{U_W} + \ex_{X_{-W}}\ex\brack{\tilde{g}_{W}(V, U_A) \mid X_{-W} = x_{-w}}  \\
    &= \ex\brack{U_W} + \ex\brack{g_{W}(\parents_W)} \\
    &= \ex\brack{W}.
\end{align}
Finally, notice $h(u_\ancestors)$ can be computed solely form $x_{-w}$ since
$u_{\ancestors}$ is fixed given $x_{-w}$. Together, this establishes that
Assumption~\ref{assump:control} is satisfied for the additive noise model.
\end{proof}

\end{document}